\documentclass{amsart}
\usepackage{fullpage}
\usepackage[usenames]{color}
\usepackage{epsf}
\usepackage{bm,bbm}
\usepackage{colortbl}
\usepackage{multirow}
\usepackage{amsmath,amssymb,amsthm}
\usepackage{algorithm}
\usepackage{algorithmic}
\usepackage{subfig,floatflt,wrapfig}
\usepackage{graphicx}
\usepackage{srcltx}
\usepackage[numbers,sort&compress]{natbib}
\usepackage{url}
\graphicspath{
{figures/}
}

\title{Directional Statistics on Permutations}
\author{Sergey~M. Plis}
\address{The Mind Research Network}
\email{splis@mrn.org}
\thanks{supported by NIH under grant number NCRR 1P20 RR021938}

\author{Terran Lane}
\address{The University of New Mexico}
\email{terran@cs.unm.edu}

\author{Vince~D. Calhoun}
\address{The Mind Research Network}
\email{vcalhoun@mrn.org}

\renewcommand{\vec}[1]{\mbox{\boldmath$#1$}}
\newtheorem{thm}{Theorem}
 \newtheorem{lem}{Lemma}

\begin{document}
\maketitle
\begin{abstract}
  Distributions over  permutations arise in  applications ranging from
  multi-object tracking  to ranking  of instances.  The  difficulty of
  dealing  with these  distributions is  caused by  the size  of their
  domain,  which is  factorial in  the number  of  considered entities
  ($n!$).    It  makes   the  direct   definition  of   a  multinomial
  distribution over  permutation space impractical for all  but a very
  small  $n$.   In this  work  we propose  an  embedding  of all  $n!$
  permutations for a  given $n$ in a surface  of a hypersphere defined
  in  $\mathbbm{R}^{(n-1)^2}$.   As  a  result of  the  embedding,  we
  acquire   ability  to   define  continuous   distributions   over  a
  hypersphere  with all  the benefits  of directional  statistics.  We
  provide   polynomial  time   projections   between  the   continuous
  hypersphere representation  and the $n!$-element  permutation space.
  The  framework   provides  a  way  to   use  continuous  directional
  probability  densities   and  the  methods   developed  thereof  for
  establishing densities over permutations.  As a demonstration of the
  benefits of  the framework  we derive an  inference procedure  for a
  state-space  model over permutations.   We demonstrate  the approach
  with applications.
\end{abstract}

%{\color{Gray}
\section{Introduction}
Since the inception of the field of computer science, there has been a
strong dichotomy  between optimization  in continuous spaces  (such as
$\mathbbm{R}^d$)  and  combinatorial  spaces  (such as  the  space  of
permutations on  $d$ objects).   While there are  computationally hard
problems in  both kinds of  spaces, combinatorial spaces are  far more
often the  villain.  It seems  as if nearly all  interesting learning,
optimization, and representation  problems in combinatorial spaces are
NP-complete  in the  best case.   Bayesian inference  in the  space of
permutations,  for   example,  is  an   important,  yet  frustratingly
difficult problem~\cite{KondorHowardJe}.

We  feel that a  key factor  at the  heart of  this dichotomy  is that
combinatorial  spaces  are   far  more  \emph{unstructured}  than  the
familiar continuous spaces.  A priori, combinatorial spaces are simply
sets of  objects, with no  relationship among them.  Compare  this to,
say,  Euclidean  $d$-space,  which  comes equipped  with  a  topology,
continuity, completeness, compact subsets, a metric, an inner product,
and so  on~\cite{mun1975a}.  On these properties are  built the entire
infrastructure    of    analysis,    including    tools    like    the
derivative~\cite{kre1978a}.  In  turn, the derivative is  at the heart
of  most  optimization  techniques  and representations  such  as  the
Fourier basis.   Essentially, the  last four centuries  of mathematics
has  been  developing tools  for  representation  and optimization  in
continuous spaces.  Combinatorial spaces, on the other hand, have been
burdened with fewer assumptions, but endowed with fewer advantages.

One strategy  for working with  combinatorial spaces is to  embed them
into continuous  spaces and work  there with powerful  analytic tools.
This  trick has  proven to  be  powerful in,  for example,  continuous
relaxations of  integer programming problems~\cite{gom1958a}.   It has
enjoyed relatively less penetration in machine learning, however.  And
where            versions             of            it            have
appeared~\cite{fligner1986distance,meila-consensus}, the connection to
the  topology  and  analytic  properties  of the  embedding  space  is
typically not made explicit, nor fully exploited.

In this paper,  we demonstrate the power of  the embedding approach by
developing  a  fast,  accurate  approach to  Bayesian  inference  over
permutations.      Arising     in     tasks     such     as     object
tracking~\cite{KondorHowardJe} or ranking~\cite{meila-consensus}, this
problem  is challenging  because  of the  factorially-large number  of
parameters  in  an  exact  representation  of  a  general  probability
distribution  in   this  space.   Prior  approaches   have  worked  by
approximating a general probability distribution with a restricted set
of  basis   functions~\cite{Huang_2009_6385},  or  by   embedding  the
permutation space  only implicitly,  and working with  a heuristically
chosen probability distribution~\cite{meila-consensus}.

The   paper   follows  the   hierarchical   structure   of  our   main
contributions,  where  each  level  of  the hierarchy  is  split  into
theoretical observations and developments that make these observations
practical:
\begin{itemize}
\item {\bf  Theoretical observations}: we demonstrate  an embedding of
  the  $n!$  permutation  set   onto  the  surface  of  a  hypersphere
  $\mathbbm{S}^d$  centered at  the origin  in  $\mathbbm{R}^{d+1}$ with
  $d=(n-1)^2-1$.
  \begin{itemize}
  \item  {\it Observations}:  we  propose a  hypersphere embedding  of
    permutations.
  \item   {\it  Practical   results}:  we   develop   polynomial  time
    transformations  between the discrete  $n!$ permutation  space and
    its continuous hypersphere representation.
  \end{itemize}
\item  {\bf  Results that  allow  practical  use  of the  theory}:  we
  demonstrate        a        bridge        between        directional
  statistics~\cite{DirectionalStatistics}  and  permutation sets  that
  leads to efficient inference.
  \begin{itemize}
  \item {\it  Observations}: we  propose the von  Mises-Fisher density
    over permutations.
  \item {\it  Practical results}: we develop  efficient inference over
    permutations in a state-space model.
    \begin{itemize}
    \item We employ analytical product and marginalization operations.
    \item  We  show  efficient  transformation of  partially  observed
      permutations    onto   the    surface    of   the    hypersphere
      $\mathbbm{S}^d$.
    \end{itemize}
  \end{itemize}
\end{itemize}

\section{Embedding permutations onto a hypersphere surface}
\label{sec:embedding}
Among  many  representations  of  permutations  in this  work  we  are
interested  in  the  $n\times  n$  permutation  matrix  representation
$\mathbf{P}$. Note that nowhere in the  paper we are going to use this
as  the permutation  operator, which  is  the usual  intension of  the
matrix  representation   of  permutations.   The   permutation  matrix
representation   is  a   square  bistochastic   matrix   with  entries
$\mathbf{P}_{ij}\in  \{0,1\}$, serves  more  as an  easy to  interpret
guide  and  a  way  to  establish some  required  properties  than  an
expression for a linear operator,  whereas we interpret it in the rest
of  the paper  merely as  a vector  in $\mathbbm{R}^{n^2}$.   To avoid
notation clutter  we treat  all the matrices  further in the  paper as
vectors  in $\mathbbm{R}^{n^2}$ omitting  the special  vector stacking
operation symbols (such  as $vec\left(\cdot\right)$), unless specified
otherwise.
\subsection{Representation}
\label{sec:hs}
In this section we will show  how a permutation set with $n!$ elements
can  be   embedded  onto  the  surface  of   a  $(n-1)^2$  dimensional
hypersphere.

Our  representation takes advantage  of the  geometry of  the Birkhoff
polytope   and   in    part   relies   on   the   Birkhoff-von~Neumann
theorem~\cite{BAPAT}, which we state here without proof.
\begin{thm}\label{thm:BvN}
  All  $n\times  n$ permutation  matrices  in $\mathbbm{R}^{n^2}$  are
  extreme points of a  convex $(n-1)^2$ dimensional polytope, which is
  the convex hull of all bistochastic matrices.
\end{thm}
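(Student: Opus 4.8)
The plan is to prove the three assertions bundled into the statement: that the bistochastic matrices form a bounded polytope, that this polytope has dimension $(n-1)^2$, and that its extreme points are exactly the permutation matrices (equivalently, that every bistochastic matrix is a convex combination of permutation matrices). First I would dispose of the dimension count, since it is the most direct. A bistochastic matrix lives in $\mathbbm{R}^{n^2}$ subject to the $2n$ affine equalities forcing each row sum and each column sum to equal $1$, together with the $n^2$ inequalities $\mathbf{P}_{ij}\geq 0$. These $2n$ equalities are not independent: summing all row-sum equations and summing all column-sum equations each yield $\sum_{ij}\mathbf{P}_{ij}=n$, so exactly one relation is redundant. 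This leaves $2n-1$ independent equalities, hence an affine hull of dimension $n^2-(2n-1)=(n-1)^2$. Boundedness follows because every entry lies in $[0,1]$, so the feasible set is a polytope of the claimed dimension.

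Next I would verify that each permutation matrix is an extreme point. Suppose a permutation matrix $\mathbf{P}$ decomposes as $\lambda\mathbf{A}+(1-\lambda)\mathbf{B}$ with $\mathbf{A},\mathbf{B}$ bistochastic and $0<\lambda<1$. Since every entry of a bistochastic matrix lies in $[0,1]$, wherever $\mathbf{P}_{ij}=0$ both $\mathbf{A}_{ij}$ and $\mathbf{B}_{ij}$ must vanish, and wherever $\mathbf{P}_{ij}=1$ both must equal $1$. Hence $\mathbf{A}=\mathbf{B}=\mathbf{P}$, so no nontrivial decomposition exists and $\mathbf{P}$ is extreme.

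The substantive direction is the converse: every extreme point is a permutation matrix. I would argue by induction on the number of nonzero entries of a bistochastic matrix $\mathbf{M}$. If $\mathbf{M}$ is not already a permutation matrix, the crux is a support lemma: the positions $(i,j)$ with $\mathbf{M}_{ij}>0$ contain a perfect matching, i.e.\ a permutation $\sigma$ with $\mathbf{M}_{i\sigma(i)}>0$ for all $i$. This is precisely where the main obstacle lies, and I would extract it from Hall's marriage theorem applied to the bipartite support graph of $\mathbf{M}$. Hall's condition is checked by a mass argument: for any set $S$ of rows, all of the mass $|S|$ they carry must sit in the columns adjacent to $S$, and those columns hold total mass equal to their cardinality, forcing $|S|$ to be at most the number of adjacent columns.

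Granting the matching, let $\mathbf{P}_\sigma$ be the corresponding permutation matrix and set $c=\min_i \mathbf{M}_{i\sigma(i)}\in(0,1)$. Then $\mathbf{M}'=(\mathbf{M}-c\mathbf{P}_\sigma)/(1-c)$ is again bistochastic but has strictly fewer nonzero entries, since at least one position drops to zero. By the induction hypothesis $\mathbf{M}'$ is a convex combination of permutation matrices, and substituting back expresses $\mathbf{M}=c\mathbf{P}_\sigma+(1-c)\mathbf{M}'$ as a convex combination with at least two distinct terms. Thus any bistochastic matrix with a fractional entry fails to be extreme, so the extreme points coincide with the permutation matrices; equivalently, the convex hull of the $n!$ permutation matrices is exactly the $(n-1)^2$-dimensional polytope of all bistochastic matrices, which is the assertion of the theorem.
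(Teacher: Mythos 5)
The first thing to say is that the paper does not prove Theorem~\ref{thm:BvN} at all: it is explicitly stated ``without proof'' with a citation to the literature, and the embedding results of Section~\ref{sec:embedding} are built on top of it. So there is no in-paper argument to compare against; what you have written is the standard self-contained proof of the Birkhoff--von Neumann theorem, and in its essentials it is correct. Your three components are the right ones: boundedness plus a dimension count from the $2n$ row/column-sum constraints with one redundancy; extremality of permutation matrices from the fact that entries of bistochastic matrices lie in $[0,1]$; and the substantive converse via Hall's marriage theorem on the support graph, followed by induction on the number of nonzero entries, peeling off $c\,\mathbf{P}_\sigma$ with $c=\min_i\mathbf{M}_{i\sigma(i)}$. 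The mass argument verifying Hall's condition is sound, the peeled matrix $\mathbf{M}'$ loses at least one nonzero entry while gaining none, and the base case of the induction is exactly the set of permutation matrices (bistochastic matrices with only $n$ nonzero entries). Note also that your dimension count is the same redundancy argument the paper itself reuses later, in Lemma~\ref{lem:subspace}, to cut $\mathbbm{R}^{n^2}$ down to a $(n-1)^2$-dimensional affine subspace.

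One loose end deserves attention: your dimension argument only shows that the affine hull of the polytope is \emph{contained in} an affine subspace of dimension $(n-1)^2$, i.e., it gives an upper bound on the dimension. To conclude that the polytope genuinely has dimension $(n-1)^2$ you need two further observations: (i) the remaining $2n-1$ equality constraints are linearly independent --- true, but it should be checked rather than asserted, since exhibiting one redundancy does not by itself rule out others (the check is easy: a vanishing linear combination $\sum_i a_i \mathbf{W}_{i,\vec{1}} + \sum_j b_j \mathbf{W}_{\vec{1},j}=0$ forces $a_i+b_j=0$ for all $i,j$, so the dependency space is one-dimensional); and (ii) the polytope contains a point satisfying all the inequalities strictly, so that a relative neighborhood of it inside the affine subspace stays in the polytope and the affine hull is the whole subspace. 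The uniform matrix $\frac{1}{n}\mathbbm{1}$ --- precisely the center of mass computed in Lemma~\ref{lem:extreme} --- does the job, since all of its entries are positive. Both fixes are routine, so this is an omission rather than an error; with them added, your proof is complete and in fact establishes more than the statement asks (the statement only requires permutation matrices to be extreme points, while you also prove the converse, which is the deep part of Birkhoff's theorem).
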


Next, we formulate a lemma that the rest of the section is based on:
\begin{lem}\label{lem:extreme}
  Extreme points of  the Birkhoff polytope are located  on the surface
  of a radius $\sqrt{n-1}$  hypersphere clustered around the center of
  mass of all $n!$ permutations.
\end{lem}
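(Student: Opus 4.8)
The plan is to identify explicitly the center of mass of the $n!$ permutation matrices and then show that every permutation matrix sits at the same Euclidean distance from it. By Theorem~\ref{thm:BvN} the extreme points of the Birkhoff polytope are precisely the $n\times n$ permutation matrices, so it suffices to work with these, regarded as vectors in $\mathbbm{R}^{n^2}$.

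First I would pin down the center of mass. Writing $\mathbf{C}=\frac{1}{n!}\sum_{\sigma}\mathbf{P}_\sigma$, each entry $\mathbf{C}_{ij}$ is the fraction of permutations having a $1$ in position $(i,j)$. Fixing a row $i$, a uniformly chosen permutation places that row's single $1$ in each of the $n$ columns with equal probability $1/n$, so $\mathbf{C}_{ij}=1/n$ for every $i,j$; that is, $\mathbf{C}=\frac{1}{n}\mathbf{J}$, where $\mathbf{J}$ denotes the all-ones matrix. This symmetry argument is cleaner than an explicit count and is the one small point that warrants care.

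Next I would compute the squared distance from an arbitrary permutation matrix $\mathbf{P}$ to $\mathbf{C}$. Expanding,
\[
  \|\mathbf{P}-\mathbf{C}\|^2
    =\sum_{i,j}\Bigl(\mathbf{P}_{ij}-\tfrac{1}{n}\Bigr)^2
    =\sum_{i,j}\mathbf{P}_{ij}^2
      -\frac{2}{n}\sum_{i,j}\mathbf{P}_{ij}
      +\frac{1}{n^2}\sum_{i,j}1 .
\]
Because the entries are $0/1$ we have $\mathbf{P}_{ij}^2=\mathbf{P}_{ij}$, and since $\mathbf{P}$ carries exactly one $1$ per row its entries sum to $n$. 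The three terms are therefore $n$, $\frac{2}{n}\cdot n=2$, and $\frac{1}{n^2}\cdot n^2=1$, giving $\|\mathbf{P}-\mathbf{C}\|^2=n-2+1=n-1$. This value is independent of the chosen permutation, so every extreme point lies exactly at distance $\sqrt{n-1}$ from the center of mass, which establishes that the permutation matrices lie on the surface of a radius-$\sqrt{n-1}$ hypersphere centered at $\mathbf{C}$.

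I expect no serious obstacle: the whole argument reduces to a short direct calculation, and the key simplification is the identity $\mathbf{P}_{ij}^2=\mathbf{P}_{ij}$, which collapses the sum of squares to a constant. The only conceptual step is recognizing that the natural center is the center of mass $\frac{1}{n}\mathbf{J}$ rather than, say, the origin, after which equidistance follows immediately.
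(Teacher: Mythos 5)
Your proposal is correct and follows essentially the same route as the paper: identify the center of mass as $\frac{1}{n}\mathbbm{1}$ (your symmetry argument is just a probabilistic rephrasing of the paper's count that $(n-1)!$ of the $n!$ permutations place a $1$ at any fixed position) and then verify that every permutation matrix lies at squared distance $n-1$ from it. The only cosmetic difference is that you expand the quadratic using $\mathbf{P}_{ij}^2=\mathbf{P}_{ij}$ while the paper sums over the $n$ one-entries and $n^2-n$ zero-entries separately; both yield $\sqrt{n-1}$ identically.
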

\begin{proof}
  To show that  the statement is valid we first  compute the center of
  mass  and then show  that each  permutation is  located at  an equal
  distance  from  this  center.   The  center  of  mass  for  all  the
  permutations on $n$ objects is defined in $\mathbbm{R}^{n^2}$ as
  $c_M = \frac{1}{n!}\sum_{k=1}^{n!}\mathbf{P}_k$.

  We  observe  that  the  number  of permutation  matrices  for  which
  $\mathbf{P}_{11}=1$  is $(n-1)!$, which  follows from  the effective
  removal of the first row and  column of an $n\times n$ matrix caused
  by  the assignment.   Thus,  $\sum \mathbf{P}_{11}  = (n-1)!$  which,
  following the same reasoning,  is true for any $\mathbf{P}_{ij}$ and
  leads to
  \begin{align}
    c_M &=   \frac{1}{n!}  (n-1)!    \mathbbm{1}   =  \frac{1}{n}
    \mathbbm{1}
  \end{align}
  To  see that  all permutations  are equidistant  from the  center of
  mass,   we   observe   that   $\|\mathbbm{1}  -   \mathbf{P}\|_2   =
  \sqrt{n^2-n}$  for any  $\mathbf{P}$. With  this observation  we can
  compute the radius of the sphere:
  \begin{align}
    r_{s}   &=   \left\|\frac{1}{n}\mathbbm{1}-\mathbf{P}\right\|_2  =
    \sqrt{(n^2-n)\frac{1}{n^2}+n(\frac{1}{n}-1)^2} = \sqrt{n-1}
  \end{align}
\end{proof}

To show  that the  hypersphere of Lemma~\ref{lem:extreme}  is embedded
into a  space of lower  dimension than $\mathbbm{R}^{n^2}$  we observe
the  following.    With  respect   to  the  original   formulation  of
permutations  in  $\mathbbm{R}^{n^2}$,  all  of the  permutations  are
located on  the intersection of  a hypersphere centered at  the origin
with      $\sqrt{n}$     radius      and     a      hypersphere     of
Lemma~\ref{lem:extreme}. This intersection is still a hypersphere only
with dimension  lowered by one. The  following lemma allows  us to get
the   dimension   of   this   hypersphere   down   to   the   one   of
Theorem~\ref{thm:BvN}.
\begin{lem}\label{lem:subspace}
  All  permutations $\mathbf{P}$  are located  on the  intersection of
  $2n-1$ hyperplanes,  i.e., in $(n-1)^2$-dimensional  affine subspace
  of $\mathbbm{R}^{n^2}$.
\end{lem}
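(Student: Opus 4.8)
The plan is to exhibit the hyperplanes explicitly as the doubly-stochastic row- and column-sum constraints and then to verify that exactly one of the $2n$ natural constraints is redundant, leaving $2n-1$ independent ones. Concretely, every permutation matrix satisfies the $n$ row equations $\sum_{j}\mathbf{P}_{ij}=1$ and the $n$ column equations $\sum_{i}\mathbf{P}_{ij}=1$; each of these $2n$ affine equations cuts out a hyperplane in $\mathbbm{R}^{n^2}$, and the intersection of all of them contains every $\mathbf{P}$. The dimension claim will then follow from the rank of the combined linear system.

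First I would record the obvious dependency: summing all $n$ row equations and summing all $n$ column equations both yield $\sum_{i,j}\mathbf{P}_{ij}=n$, so the two blocks of constraints share one relation and the intersection is already carved out by only $2n-1$ of the hyperplanes (drop any single column equation, say). The coefficient matrix of the full system therefore has rank at most $2n-1$.

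The main step is to show the rank is \emph{exactly} $2n-1$, i.e. that this is the only dependency. I would examine an arbitrary linear combination of the row functionals with coefficients $a_i$ and the column functionals with coefficients $b_j$: the coefficient attached to the variable $\mathbf{P}_{ij}$ in the resulting functional on $\mathbbm{R}^{n^2}$ is $a_i+b_j$. For the combination to be the zero functional we need $a_i+b_j=0$ for every pair $(i,j)$, which forces all $a_i$ to coincide and all $b_j$ to equal their common negative. Hence the space of dependencies is one-dimensional, spanned by ``sum of rows minus sum of columns,'' so the rank is precisely $2n-1$. I expect this rank computation to be the crux of the argument.

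Finally, I would assemble the dimension count. Since the affine system is consistent (every permutation matrix satisfies it, so the solution set is nonempty) and its coefficient matrix has rank $2n-1$, the solution set is an affine subspace of dimension $n^2-(2n-1)=(n-1)^2$, inside which all $\mathbf{P}$ lie. The row/column-sum identities and the final subtraction are routine once the single redundancy is pinned down.
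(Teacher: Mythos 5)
Your proposal is correct and follows essentially the same route as the paper: both identify the $2n$ row- and column-sum hyperplanes, find the single dependency among them, and count $n^2-(2n-1)=(n-1)^2$. If anything, yours is slightly more complete --- the paper merely exhibits the one dependency (so rank at most $2n-1$) and implicitly assumes the remaining $2n-1$ normals are independent, whereas your observation that $a_i+b_j=0$ for all $(i,j)$ forces a one-dimensional space of dependencies pins the rank down to exactly $2n-1$, which is what the dimension claim actually requires.
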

\begin{proof}
  Let us denote by $\mathbf{W}_{i,\vec{1}}$ an $n\times n$ matrix with
  all elements  except a single $i^{th}$  row of ones set  to zero and
  likewise $\mathbf{W}_{\vec{1},i}$ for columns. Observe that:
  \begin{align}
    \begin{split}
      vec\left(\mathbf{W}_{i,\vec{1}}\right)^T
      vec\left(\mathbf{P}\right)         &=         1
    \end{split}
    \begin{split}
      vec\left(\mathbf{W}_{\vec{1},i}\right)^T
      vec\left(\mathbf{P}\right) &= 1
    \end{split}
  \end{align}
  for  any permutation matrix\footnote{In  fact, for  any bistochastic
    matrix,  as implied by  Theorem~\ref{thm:BvN}}.  It  follows, that
  all permutations are located  at an intersection of $2n$ hyperplanes
  defined    by    their    normals:   $\mathbf{W}_{\vec{1},i}$    and
  $\mathbf{W}_{i,\vec{1}}$, with $n\in\{1\hdots  n\}$, and having bias
  of  1.    This  set  is,  however,  not   independent,  because  any
  $\mathbf{W}_{i,\vec{1}}$ can be expressed by a linear combination of
  the  other $2n-1$  vectors by  setting weights  of $\mathbf{W}_{j\ne
    i,\vec{1}}$ to  $-1$ and weights of  $\mathbf{W}_{\vec{1},i}$ to 1
  for $i,j\in\{1\hdots  n\}$. This  leads to $2n-1$  hyperplanes whose
  intersection forms the space in which the hypersphere containing the
  Birkhoff-polytope  is  located. Thus,  the  dimension  of the  space
  containing the polytope is $n^2 - 2n +1 = (n-1)^2$.
\end{proof}

All permutation  matrices on  $n$ objects belong  to the surface  of a
radius      $\sqrt{n-1}$      hypersphere,      $\mathbbm{S}^d$,      in
$\mathbbm{R}^{(n-1)^2}$  as  established  by  Lemmas~\ref{lem:extreme}
and~\ref{lem:subspace}.  We  do not  rigorously show here,  but assume
that  by inherent symmetry  in the  structure of  permutation matrices
they are  distributed evenly across the surface  of $\mathbbm{S}^d$.

\subsection{Transformations}
\label{sec:proj}
The representation  of the  previous section allows  us to  define and
manipulate  probability  density  functions on  $\mathbbm{S}^d$  using
approaches  of  continuous  mathematics  and  only  then  transforming
quantities of  interest back to  the discrete $n!$  permutation space.
This is useful  when there is a way  to efficiently transform elements
of one space to  the other.  Next we show how this  can be achieved in
polynomial time.

The key  components posing  difficulties are discrete  vs.  continuous
space, and  the requirement  of $\mathbbm{S}^d$ to  be origin-centered
(required for Section~\ref{sec:vmf}).  The former poses a considerably
more challenging  problem than  the latter and  absence of  both would
reduce  the  required transformations  to  a  simple  change of  basis
between  $\mathbbm{R}^{n^2}$ and $\mathbbm{R}^{(n-1)^2}$.   We develop
the transformations in the proof to the following lemma.
\begin{lem}
  There  exist polynomial  time transformations  between  the discrete
  $n!$  permutation  space  and  the surface  of  the  origin-centered
  $(n-1)^2$ dimensional hypersphere of radius $\sqrt{n-1}$.
\end{lem}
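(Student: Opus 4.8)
The plan is to construct both transformations explicitly and then bound their running times, treating the origin-centering as the easy ingredient and the discrete decoding as the hard one. Since Lemma~\ref{lem:extreme} places every permutation on a sphere centered at $c_M=\frac{1}{n}\mathbbm{1}$, I would work throughout with the centered vectors $\mathbf{P}-c_M$, which lie in the $(n-1)^2$-dimensional \emph{linear} subspace obtained from the affine subspace of Lemma~\ref{lem:subspace}; centering the hyperplane normals $\mathbf{W}_{i,\vec{1}},\mathbf{W}_{\vec{1},i}$ simply turns their constraints into the homogeneous conditions that all row sums and all column sums vanish. Origin-centering is then just this translation followed by a change of basis.

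For the forward direction, mapping a permutation to a sphere point, I would fix an orthonormal basis $\{u_1,\dots,u_{n-1}\}$ of the zero-sum subspace $\{x\in\mathbbm{R}^n:\mathbbm{1}^Tx=0\}$ and collect these vectors as the columns of an $n\times(n-1)$ matrix $U$. The matrices $\{u_i u_j^T\}_{i,j=1}^{n-1}$ then form an orthonormal basis (in the Frobenius inner product) of the centered subspace, of exactly the dimension $(n-1)^2$ guaranteed by Lemma~\ref{lem:subspace}. The coordinates of a centered permutation are the entries of $U^T\mathbf{P}U$, computable in $O(n^3)$, where $c_M$ contributes nothing because the columns of $U$ are zero-sum. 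Because $U$ has orthonormal columns spanning precisely the subspace in which $\mathbf{P}-c_M$ lives, this map is an isometry, so $\|U^T\mathbf{P}U\|_F=\|\mathbf{P}-c_M\|_F=\sqrt{n-1}$ by Lemma~\ref{lem:extreme}; the image therefore lands on the origin-centered radius-$\sqrt{n-1}$ hypersphere in $\mathbbm{R}^{(n-1)^2}$, exactly as claimed.

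For the reverse direction, given a point $Y$ on the hypersphere I would undo the change of basis and re-center, forming $\mathbf{M}=UYU^T+c_M\in\mathbbm{R}^{n^2}$. In general $\mathbf{M}$ is not a permutation matrix, and decoding it to one is the real content of the lemma. Since $\|\mathbf{P}\|_2^2=n$ is constant over all permutations, minimizing $\|\mathbf{P}-\mathbf{M}\|_2$ is equivalent to maximizing $\langle\mathbf{P},\mathbf{M}\rangle$, which is a linear assignment problem. By the Birkhoff-von~Neumann Theorem~\ref{thm:BvN} the linear program over the Birkhoff polytope attains its optimum at a vertex, so relaxing the integrality is lossless, and the Hungarian algorithm returns the optimal permutation in $O(n^3)$.

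The main obstacle is precisely this reverse decoding, the discrete-versus-continuous gap flagged in the text as the harder of the two difficulties: a continuous point almost never coincides with a vertex of the Birkhoff polytope, so a naive linear inverse does not output a permutation. The substantive steps are to argue that the assignment-problem rounding is well defined and polynomial, and that it is consistent, returning $\mathbf{P}$ when applied to the forward image of $\mathbf{P}$ itself, which holds because a vertex is its own nearest permutation. Combining the $O(n^3)$ forward bilinear form with the $O(n^3)$ Hungarian decoding then yields polynomial-time transformations in both directions, completing the argument.
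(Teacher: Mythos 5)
Your proof is correct, and its skeleton matches the paper's: forward, translate by $\frac{1}{n}\mathbbm{1}$ and change basis into the $(n-1)^2$-dimensional subspace; backward, invert the linear map and round to a permutation by solving a linear assignment problem in $O(n^3)$. Two details differ genuinely, and both favor your version. First, the paper builds a generic orthonormal basis of the subspace by QR factorization (a one-time $O(n^6)$ cost) and projects by explicit inner products against $(n-1)^2$ vectors of length $n^2$, i.e.\ $O(n^4)$ per transformation; your Kronecker-structured basis $\{u_iu_j^T\}$ reduces each transformation to the bilinear forms $U^T\mathbf{P}U$ and $UYU^T$, which cost $O(n^3)$ and make the isometry --- hence the radius-$\sqrt{n-1}$ claim --- transparent rather than implicit. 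Second, your rounding objective is actually the exact one, whereas the paper's is not: the paper minimizes $\sum_{ij}\mathbf{D}_{ij}\mathbf{P}_{ij}$ with weights $\mathbf{D}_{ij}=(\mathbf{T}_{ij}-1)^2$, but expanding $\|\mathbf{T}-\mathbf{P}\|_2^2$ using $\mathbf{P}_{ij}^2=\mathbf{P}_{ij}$ gives $\|\mathbf{T}\|^2+n-2\langle\mathbf{T},\mathbf{P}\rangle$, so the nearest permutation is the one maximizing $\langle\mathbf{T},\mathbf{P}\rangle$; the paper's weights carry an extra term $\sum_{ij}\mathbf{P}_{ij}\mathbf{T}_{ij}^2$ that varies with $\mathbf{P}$, so its matching can return a permutation that is not the $L_2$-nearest one (e.g.\ for $n=2$ with $\mathbf{T}_{11}=0.9$, $\mathbf{T}_{12}=2$, $\mathbf{T}_{21}=0$, $\mathbf{T}_{22}=0.5$ it selects the identity although the anti-diagonal permutation is closer). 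Your formulation, justified by the constancy of $\|\mathbf{P}\|_2^2=n$, together with the Hungarian algorithm (or the Birkhoff--von~Neumann vertex argument for the LP relaxation), closes this gap; the consistency check that a permutation decodes to itself is a sensible addition the paper omits.
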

\begin{proof}
  The  transformation  {\bf from  a  permutation space  to  $\mathbbm{S}^d$}
  requires only  a short sequence of  linear operations as  it is made
  clear by lemmas of Section~\ref{sec:hs}:
  \begin{enumerate}
  \item    Shift     the    permutation    matrix     $\mathbf{P}$    by
    $\frac{1}{n}\mathbbm{1}$ to put the center of mass at the origin.
  \item  Change  the  basis  by projecting into  the
    $\mathbbm{R}^{(n-1)^2}$         subspace        orthogonal        to
    $\mathbf{W}_{\vec{1},i}$     and     $\mathbf{W}_{i,\vec{1}}$.
  \end{enumerate}
  Since  there  are  $(n-1)^2$  basis  vectors of  length  $n^2$,  the
  projection  operation takes $O(n^4)$.   Note that  the basis  can be
  obtained by  the QR factorization,  which is $O(n^6)$ in  this case,
  but needs to be computed only once for a given $n$.
  
  Transforming  an arbitrary  point {\bf  from $\mathbbm{S}^d$  to the
    permutation space}  is more challenging.  Now we  have to linearly
  transform the point  from $\mathbbm{S}^d$ to $\mathbbm{R}^{n^2}$ and
  then  among  $n!$ possibilities  find  a  permutation,  that is  the
  closest, in  $L_2$ sense, to  a given point.  The  transformation is
  easily  done by  inverting the  order of  operations for  going from
  $\mathbbm{R}^{n^2}$  to $\mathbbm{S}^d$,  which amounts  to $O(n^4)$
  operations.   Let us  show  how to  efficiently  find a  permutation
  matrix closest to a transformed point.
  
  Given    an     arbitrary    point    $\mathbf{T}^\mathbbm{S}$    in
  $\mathbbm{R}^{n^2}$,    which   corresponds    to    a   point    on
  $\mathbbm{S}^d$,  as indicated  by the  superscript, we  introduce a
  matrix $\mathbf{D}$ where
  \begin{align}
    \mathbf{D}_{ij} = (\mathbf{T}_{ij}^\mathbbm{S} - 1)^2
  \end{align}
  Finding   the  permutation  $\mathbf{P}^{\mathbbm{S}}$   closest  to
  $\mathbf{T}^{\mathbbm{S}}$         amounts         to        finding
  $\mathbf{P}^{\mathbbm{S}}$               that              minimizes
  $\sum_{ij}\mathbf{D}_{ij}\mathbf{P}_{ij}$.   This  is  the  same  as
  matching every column  and each row to a  single counterpart so that
  the  sum   of  matching   weights  (elements  of   $\mathbf{D}$)  is
  minimal. In  this case, $\mathbf{D}$  is an $n\times  n$ edge-weight
  matrix  for  a $2n$  node  bipartite  graph  with $n$  elements  per
  partition.  This is the familiar minimum weighted bipartite matching
  problem~\cite{west2001introduction}.  This  observation allows us to
  apply      a      minimum      weighted      bipartite      matching
  algorithm~\cite{west2001introduction}   and  obtain   a  permutation
  $\mathbf{P}^{\mathbbm{S}}$  closest  to  $\mathbf{T}^{\mathbbm{S}}$.
  The running time of the  fastest general algorithms for solving this
  problem is $O(n^2\log{n} + n^2e)$,  where $e$ is the number of edges
  in the  bipartite graph.  Since the  number of edges in  our case is
  always $n$,  the running time effectively  becomes $O(n^3)$. However
  it is dominated by the time of projecting a point from $\mathbbm{S}^d$
  to $\mathbbm{R}^{n^2}$, which is $O(n^4)$ as shown before.
\end{proof}

Coupling  the   probability  representations  to   the  transformation
operations bridges  the gap between the  discrete, combinatorial space
of permutations and the continuous, low-dimensional hypersphere.  This
allows us to lift the  large body of results developed for directional
statistics~\cite{DirectionalStatistics}    directly   to   permutation
inference.

\section{Directional statistics}
\label{sec:vmf}
A number of probability density functions on $\mathbbm{S}^d$ have been
developed        in       the        field        of       directional
statistics~\cite{DirectionalStatistics}.  A  detailed account is given
for  an interested reader  in~\cite[Chapter 9]{DirectionalStatistics}.
The directional statistics framework allows us to define quite general
classes of  density functions over  permutations.  In the rest  of the
paper, we use one of the basic models to demonstrate the usefulness of
our representation and the model as well.

\subsection{von Mises-Fisher distribution}
This is a  $m$-variate von Mises-Fisher\footnote{Sometimes also called
  the Langevin distribution.}  (vMF) distribution of a $m$-dimensional
vector  $\vec{x}$, where  $\|\vec{\mu}\|=1$,  $\vec{\kappa}\ge 0$  and
$m\ge 2$:
\begin{align}
  \begin{split}
    f(\vec{x}|\vec{\mu},\kappa) &=
    Z_m\left(\kappa\right)e^{\kappa\vec{\mu}^T\vec{x}}
    \label{eq:vMF}  
  \end{split}
  \begin{split}
    \hspace{0.05in} &\mbox{with normalization term} \hspace{0.15in}
  \end{split}
  \begin{split}
    Z_m\left(\kappa\right) &=
    \frac{\kappa^{m/2-1}}{(2\pi)^{m/2}\bm{I}_{m/2-1}(\kappa)},
  \end{split}
\end{align}
where $\bm{I}_r(\cdot)$ is the $r^{th}$ order modified Bessel function
of  the first  kind  and $\vec{\kappa}$  is  called the  concentration
parameter.    Examples   of   samples   from   the   distribution   on
$\mathbbm{S}^2$ are shown in Figure~\ref{fig:vMFsamples}.
\begin{figure}[ht!]
  \includegraphics[width=1\textwidth]{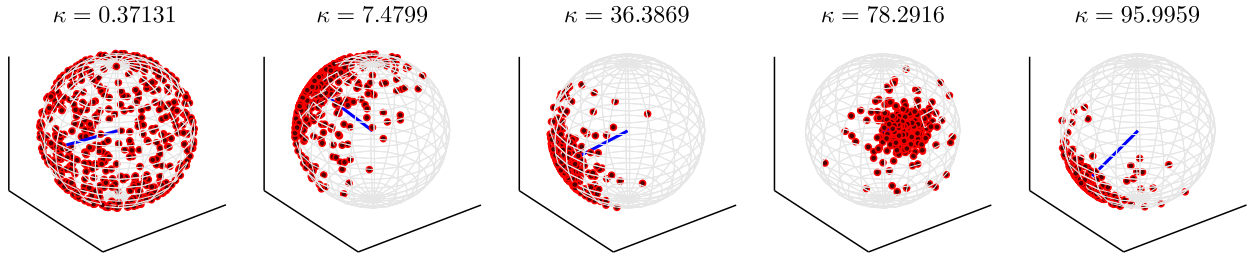}
  \caption{Samples  of  the   von  Mises-Fisher  density  function  on
    $\mathbbm{S}^2$ for random $\vec{\mu}$ and $\kappa$.}
  \label{fig:vMFsamples}
\end{figure}

In terms of a pdf on permutations the vMF establishes a distance-based
model, where distances are  geodesic on $\mathbbm{S}^d$. The advantage
of the  formulation in a  continuous space is  the ability to  apply a
range of  operations on the  pdf and still  end up with the  result on
$\mathbbm{S}^d$.   This   advantage  is  realized   in  the  inference
procedures which we establish next.

\subsection{Efficient inference in a state space model}
The  results presented  above establish  a  framework in  which it  is
possible to define and manage in reasonable time probability densities
over permutations.   An important application of this  framework is in
the probabilistic data association (PDA)~\cite{PDAFRasmussPAM}. In PDA
we  are interested  in maintaining  links between  objects  and tracks
under the noisy tracking conditions.  Ignoring the underlying position
estimation  problem we  focus  on  the part  related  to the  identity
management, as in~\cite{KondorHowardJe},  which boils down to tracking
a  hidden permutation  (identity  assignment) under  a noisy  observed
assignment.

In order to  perform identity tracking of permutations  in the context
of recursive Bayesian filtering (which we  are going to do) we need to
define the following components:
\begin{enumerate}
\item     A      transition     model,    $P\left(\mathbf{X}_t     |
    \mathbf{X}_{t-1}\right)$;
\item    An     observation    model,    $P\left(\mathbf{Y}_t    |
    \mathbf{X}_{t}\right)$  where  $\mathbf{Y}_{t}$  is the  noisy
  observation of the hidden permutation matrix $\mathbf{X}_{t}$;
\item A way to perform the following operations:
  \begin{align}\label{eq:product}
    \mbox{multiplication:} \hspace{0.5cm} &
    P(\mathbf{X}_t|\mathbf{Y}_t)                          \propto
    P(\mathbf{Y}_t|\mathbf{X}_t)
    P(\mathbf{X}_t|\mathbf{Y}_{t-1})      
    \\\label{eq:projection}  
    \mbox{marginalization:} \hspace{0.5cm} &
    P(\mathbf{X}_t|\mathbf{Y}_{t-1})  =
    \int                               P(\mathbf{X}_t|\mathbf{X}_{t-1})
    P(\mathbf{X}_{t-1}|\mathbf{Y}_{t-1})    d\mathbf{X}_{t-1} 
  \end{align}
\end{enumerate}

Avoiding  transformation overhead  we  restrict all  of  the above  to
$\mathbbm{S}^d$.    Hence,  $\mathbf{X}$  and   $\mathbf{Y}$  are
$\mathbbm{S}^d$  representations   of  their  respective   hidden  and
observed  permutations.   We define  both  transition and  observation
models  as vMF  functions centered  at the  true permutation.   Due to
similarity of the  vMF model to the multivariate  Gaussian density, it
seems  natural to  view this  recursive filter  as an  analogy  of the
Kalman filter.  In this view, the result of this sections is porting a
widely  successful tracking  model  to the  discrete $n!$  permutation
space.

To further  stress the  analogy with the  Kalman filter, we  show that
projection operation can be computed analytically in a closed form and
marginalization  operation can be  efficiently approximated  with good
accuracy~\cite{chiuso1998visual,DirectionalStatistics}.             For
observation      model      $P(\mathbf{Y}_t|\mathbf{X}_t)      \propto
vMF(\mathbf{Y}_t,\kappa_{obs})$        and       posterior       model
$P(\mathbf{X}_t|\mathbf{Y}_{t-1})                               \propto
vMF(\vec{\mu}_{pos},\kappa_{pos})$    the   multiplication   operation
results in a vMF for $P(\mathbf{X}_t|\mathbf{Y}_{t})$ parametrized as
\begin{align}
  \begin{split}
    \vec{\mu}_t &= \frac{1}{\kappa}\left(\kappa_{obs}\mathbf{Y}_t +
      \kappa_{pos}\vec{\mu}_{pos}\right)
  \end{split} &
  \begin{split}
    \kappa_t &= \|\kappa_{obs}\mathbf{Y}_t + \kappa_{pos}\vec{\mu}_{pos}\|.
  \end{split}
\end{align}
In  the case of  a vMF  transition model,  the marginalization  can be
performed with a  reasonable accuracy and speed using  the fact that a
vMF  can  be  approximated  by  an  angular  Gaussian  and  performing
analytical convolution of  angular Gaussian with subsequent projection
back   to  vMF   space~\cite{DirectionalStatistics}.    Resulting  vMF
$P(\mathbf{X}_t|\mathbf{Y}_{t-1})$ is parametrized as:
\begin{align} 
  \begin{split}
    \vec{\mu} &= \mathbf{X}_{t-1} + \vec{\mu}_{pos}
  \end{split}
  \begin{split}
    \kappa &= A_d^{-1}(A_d(\kappa_{pos})A_d(\kappa_{tr}))
  \end{split} &
  \begin{split}
    A_d(\kappa) &= \frac{\bm{I}_{d/2}(\kappa)}{\bm{I}_{d/2-1}(\kappa)}
  \end{split}
\end{align}
The ratio of modified Bessel  functions required for this approach can
be efficiently computed with high accuracy by using Lentz method based
on evaluating continued fractions~\cite{lentz1976generating}.

\subsubsection{Partial observations}
\label{sec:partobs}
\begin{wrapfigure}{r}{0.38\columnwidth}
  \centering
  \includegraphics[width=0.35\columnwidth]{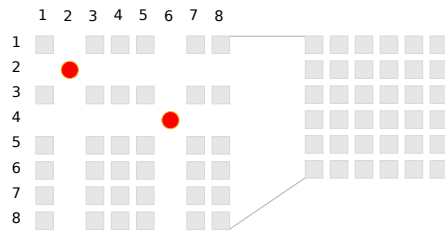}
  \caption{An example of a fallback to a lower dimensional permutation
    space when a partial observation becomes available.}
  \label{fig:fallback}
\end{wrapfigure}
Analytical computation  of the Bayesian  recursive filtering presented
above relies on the fact that permutations are observed completely. In
tracking  problems  that  would  mean  the algorithm  has  to  receive
observations (up to noise) of identities of every tracked object. This
is a  rare setting and  most commonly observations are  available only
partially.

When  a partial  observation  of $o$  objects  becomes available,  the
dimension  of the unknown  part of  $\mathbf{Y}$ is  reduced from
$n^2$   to   $(n-o)^2$.   The   mechanism   of   this   is  shown   in
Figure~\ref{fig:fallback}, where circles indicate two observed objects
and squares  indicate the unknown parts of  $\mathbf{P}$.  The unknown
part of the representation of $\mathbf{P}$ on $\mathbbm{S}^d$ needs to
be    marginalized    out    to    obtain    the    likelihood    used
in~\eqref{eq:product}.   Figure~\ref{fig:fallback}   shows  that  this
marginalization  is   straightforward  in  $\mathbbm{R}^{n^2}$  space.
Unfortunately,   to   implement~\eqref{eq:projection},   we  need   to
marginalize  on  the  surface  of  the  sphere,  $\mathbbm{S}^d\subset
\mathbbm{R}^{(n-1)^2}$ -- a much more difficult task.

Denoting the orthogonal part  of the basis in $\mathbbm{R}^{n^2}$ that
represents  the  $\mathbbm{R}^{(n-1)^2}$ subspace  by  an $n^2  \times
(n-1)^2$ matrix $\mathbf{Q}$, we project into this subspace by:
\begin{align}
  vec\left(\mathbf{Y}\right) &= \mathbf{Q}^Tvec\left(\mathbf{P -
      \frac{1}{n}\mathbbm{1}}\right).
\end{align}

In the  case of a partial  observation, we know which  elements of the
vector being projected are consistent with the observation and are not
going to change  and which elements can have  any possible value. This
allows us to split the resulting vector $\mathbf{Y}$ into
\begin{align}
  \mathbf{Y} = \mathbf{Y}_{*} + \mathbf{Y}_{?},
\end{align}
where  $\mathbf{Y}*$ and  $\mathbf{Y}_?$  respectively denote  the
observed and unobserved parts.

The likelihood with the unknown observations marginalized out becomes:
\begin{align}
  \frac{1}{Z} \int_{\mathbf{Y}_?}e^{\kappa_1 \mathbf{Y}_*^T\vec{x}
    +                                                          \kappa_1
    \mathbf{Y}_?^T\vec{x}}d\mathbf{Y}_?
  &=                      \frac{1}{Z}                      e^{\kappa_1
    \mathbf{Y}_*^T\vec{x}}
  \int_{\mathbf{Y}_?}             e^{\kappa_1\mathbf{Y}_?^T\vec{x}}
  d\mathbf{Y}_?
  \label{eq:margvMF}
\end{align}

Some  details make  computing the  integral  in~\eqref{eq:margvMF} not
totally  trivial: $\vec{x},\mathbf{Y}_*$,  and  $\mathbf{Y}_?$ are  of
different  length;  although $\vec{x}$  is  fixed, $\mathbf{Y}_*$  and
$\mathbf{Y}_?$  are  not  allowed   to  take  any  possible  angle  in
$\mathbbm{R}^{(n-1)^2}$. We omit the details of the derivation dealing
with these difficulties and just state the parameters of the resulting
vMF likelihood function:
\begin{align}
  \begin{split}
  \vec{\mu} &= \frac{\mathbf{Y}_*}{\| \mathbf{Y}_*\|_2}    
  \end{split}
  \begin{split}
  \kappa &= \|\kappa_1 \mathbf{Y}_*\|_2
  \end{split}
\end{align}

Thus, in  the case of vMF  we can execute a  recursive Bayesian filter
using  only  analytical  computation  even  in  the  cases  when  only
partially observed data is available. This makes the state space model
applicable in a  much wider range of scenarios  than our initial model
presented in Section~\ref{sec:cfo}.

\section{Experiments}
\begin{figure}[ht!]
  \centering
  \subfloat[][25 objects]{
    \includegraphics[width=0.43\textwidth]{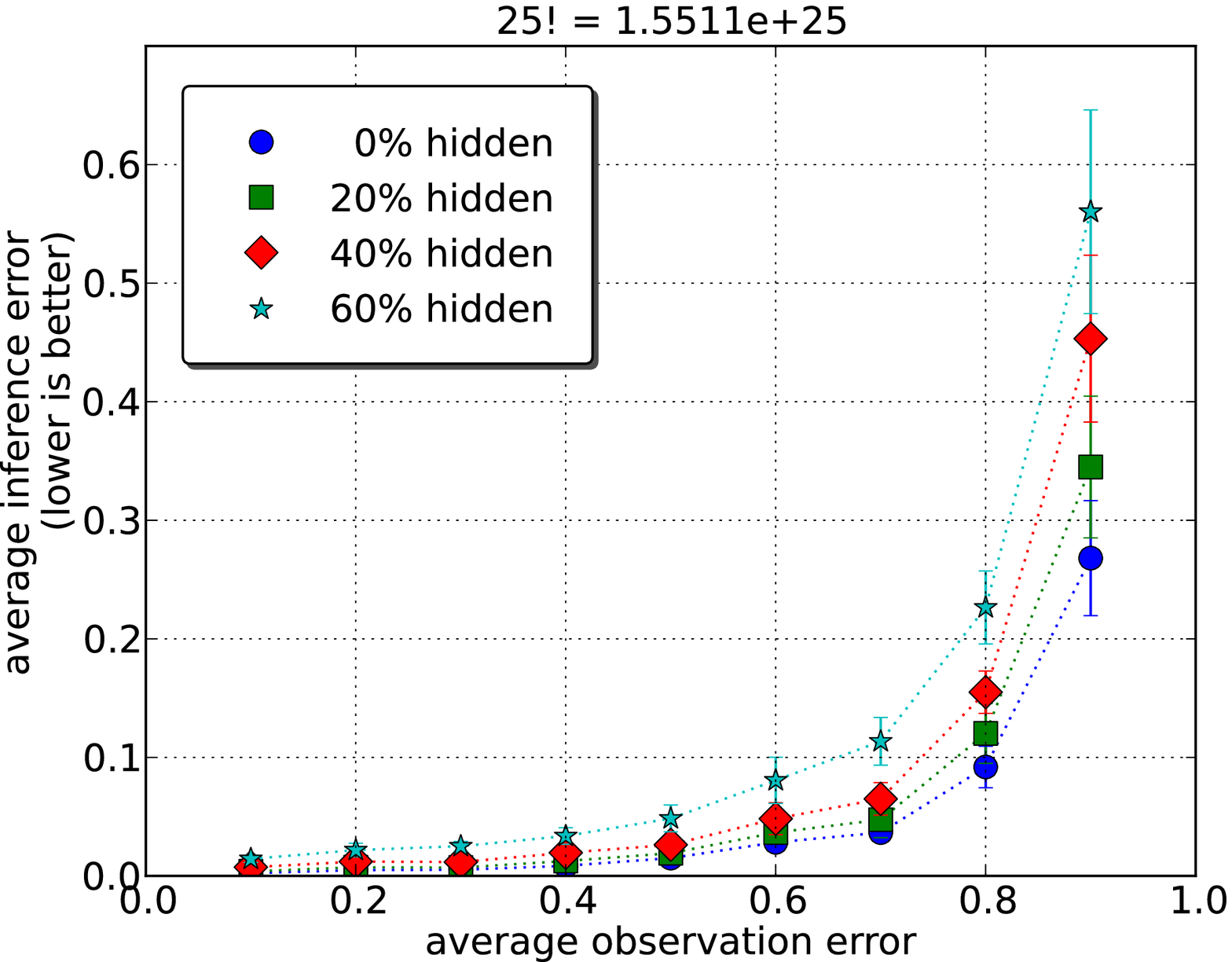}
  }
\qquad
  \subfloat[][50 objects]{
    \includegraphics[width=0.43\textwidth]{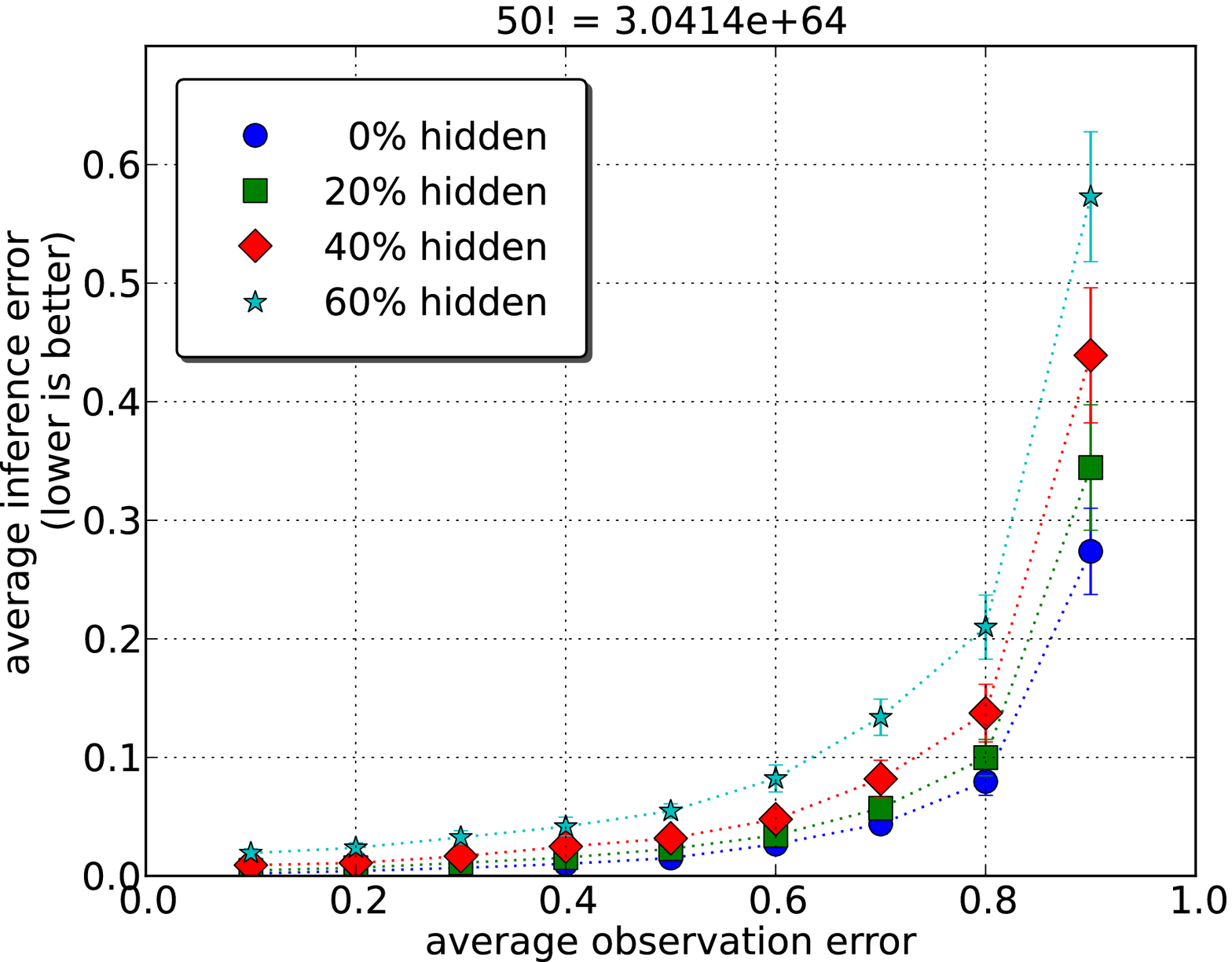}
  }
  \caption{Average error of a random hidden permutation inference from
    100 (partial)  noisy observations on  25 and 50  objects simulated
    datasets.   Runs   were  repeated   10  times  with   a  different
    permutation. }
  \label{fig:simulations}
\end{figure}
To demonstrate  correctness of  our approach, we  show inference  of a
fixed  hidden   permutation  from  its   noisy  partial  observations.
Figure~\ref{fig:simulations}  shows  results   of  this  inference  on
dataset  of  25 and  50  objects.   In  these first,  synthetic  data,
experiments,  we first  randomly  chose a  true (hidden)  permutation,
$\mathbf{P}_{true}$.     We   controlled   both    observation   noise
($\nu\in{0.1, 0.2,  ..., 0.9}$) and  fraction of objects  missing from
observations ($m\in{0\%, 20\%, 40\%, 60\%}$).  Noisy observations were
drawn      from     vMF($\mathbf{P}_{true}$,$\kappa_{\nu}$),     where
$\kappa_{\nu}$  was chosen  to achieve  $\nu$ fraction  of incorrectly
observed  object identities.   The final  observation, $\mathbf{P}_m$,
was  generated  by  hiding  $m$  percent of  entries  from  the  noisy
observation  matrix, chosen uniformly  at random  without replacement.
Figure~\ref{fig:simulations} shows that our representation of the $n!$
discrete  permutation  space  is   functional  and  the  approach  can
gracefully handle  large number  of objects, partial  observations and
observation noise.

\begin{figure}[ht!]
  \centering
  \subfloat[][tracking identities of 6 flights]{
    \includegraphics[width=0.43\textwidth]{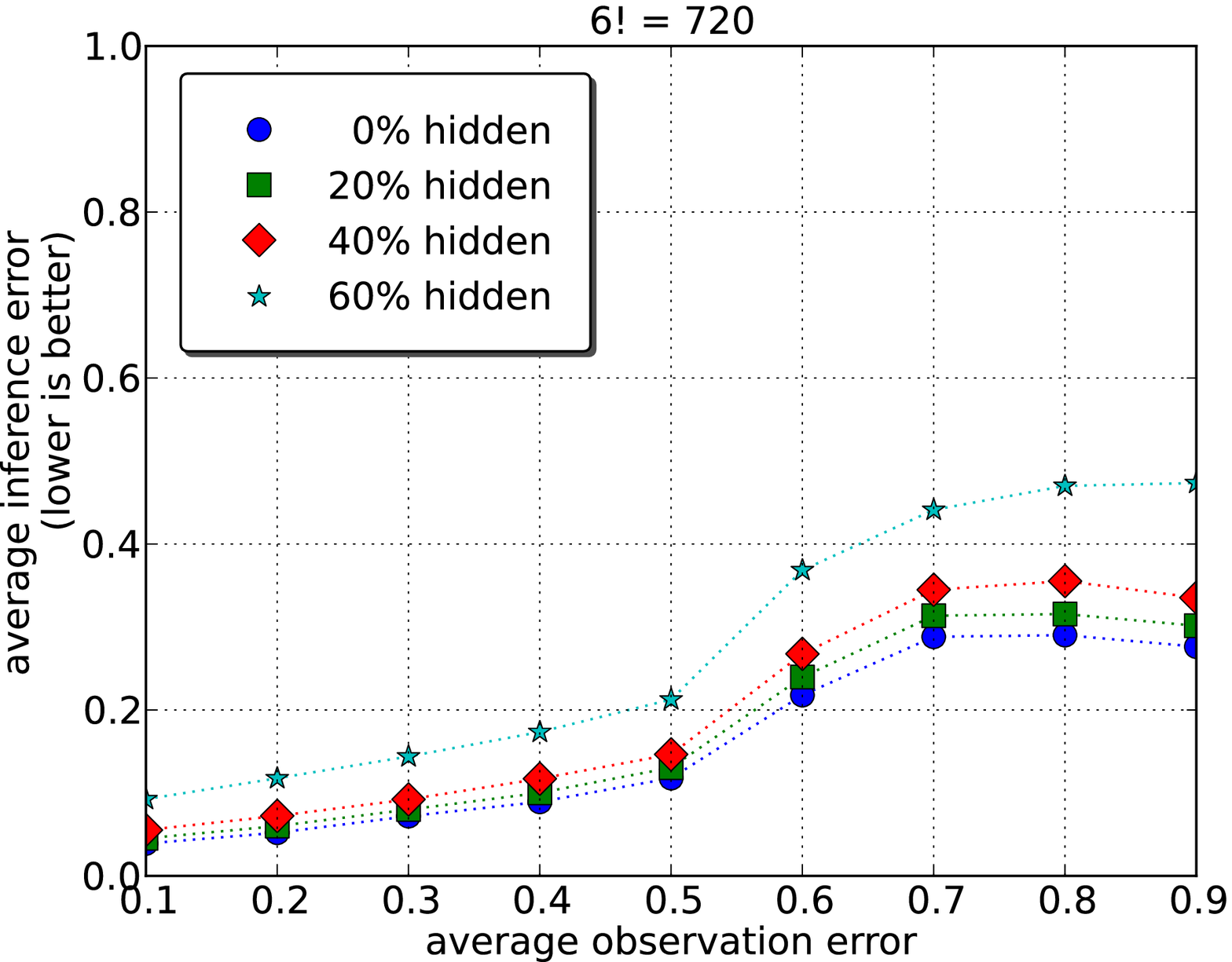}
  }
\qquad
  \subfloat[][tracking identities of 10 flights]{
    \includegraphics[width=0.43\textwidth]{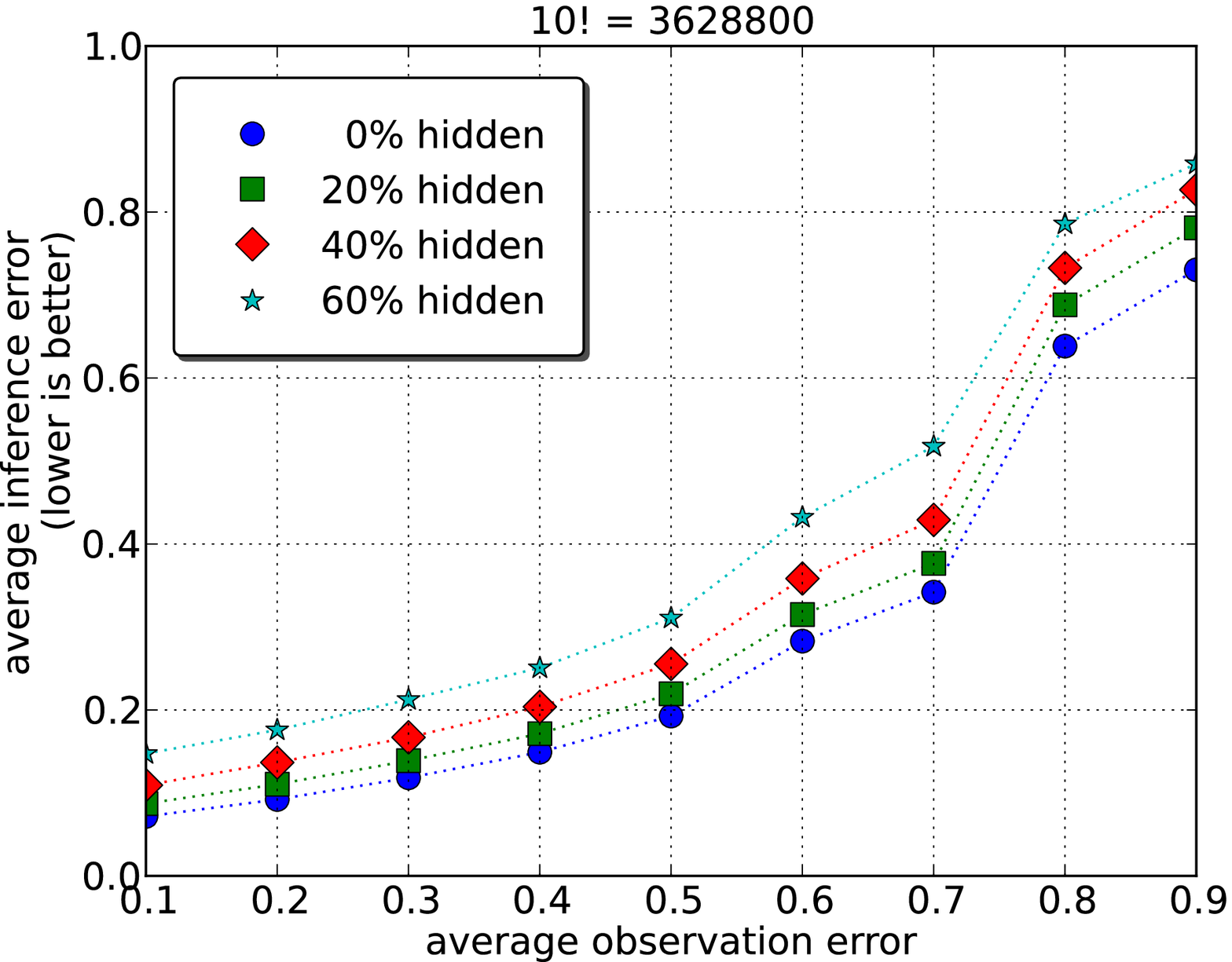}
  }
  \caption{Tracking error on the air traffic control dataset for 6 and
    10 planes as a function of observation noise shown as the fraction
    of  incorrectly reported  planes.  Separate  plots show  error for
    partial  observations  when a  fraction  of  object identities  is
    unobserved.}
  \label{fig:flights}
\end{figure}
The above  simulation was generated with  the noise model  used by the
inference  and did  not have  a  temporal component,  although it  was
applied to a really large state  space.  Next we show experiments on a
tracking dataset with a non-vMF  transition model. We use a dataset of
planar  locations of aircraft  within a  30 mile  diameter of  John F.
Kennedy  airport of  New  York.   The data,  in  streaming format,  is
available at {\tt  http://www4.passur.com/jfk.html}. The complexity of
the  plane routes  and  frequent  crossings of  tracks  in the  planar
projection  make this  an interesting  dataset for  identity tracking.
Identity  tracking results  on this  dataset,  in the  context of  the
symmetric semigroup approach to permutation inference, were previously
reported  in~\cite{KondorHowardJe}.   Replicating  the  task  reported
in~\cite{KondorHowardJe}, we  show results  on tracking datasets  of 6
and 10 flights, dropping the 15 flights dataset (but see below).

The  dataset comes prelabeled,  but the  uncertainty is  introduced by
randomly  swapping  identities  of   flights  $i$  and  $j$  at  their
respective  locations  $\vec{x}_i$  and $\vec{x}_j$  with  probability
$p_{swap}exp(-\|\vec{x}_j(t)    -   \vec{x}_i(t)\|^2/(2s^2))$,   where
$p_{swap}  = 0.1$  and $s  = 0.1$  are strength  and  scale parameters
respectively.

We then  generated observation and  hidden identity noise in  the same
way  as  for  the  prior experiment.   Figure~\ref{fig:flights}  shows
results of  applying our identity  tracking method to the  air traffic
control dataset for various levels  of observation noise and amount of
missing  identity  observations.   It  is  difficult  to  compare  the
performance to the method of~\cite{KondorHowardJe} applied to the same
dataset, since it is not clear how observation noise levels correspond
to each other. However, error values reported in~\cite{KondorHowardJe}
were 0.12 to 0.17  on the 6 flights dataset and 0.2  to 0.32 on the 10
flights dataset. This  is comparable to what we  get with our approach
for  observation  error below  50\%,  even  when  60\% of  the  flight
identities  are unobserved. Results  of the  application of  our state
space model  to this dataset indicate  robustness of the  model to the
choice  of  the  transition   model,  which  was  different  from  the
generative model of our tracking inference engine.

\begin{figure}[ht!]
  \centering
  \subfloat[][a frame from the tracking task]{
    \includegraphics[width=0.43\textwidth]{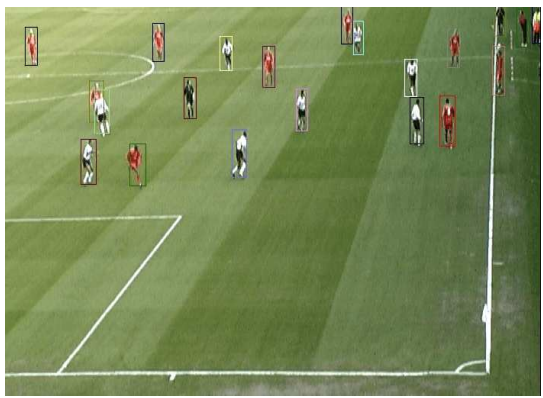}
  }
\qquad
  \subfloat[][tracking identities of 41 players]{
    \includegraphics[width=0.43\textwidth]{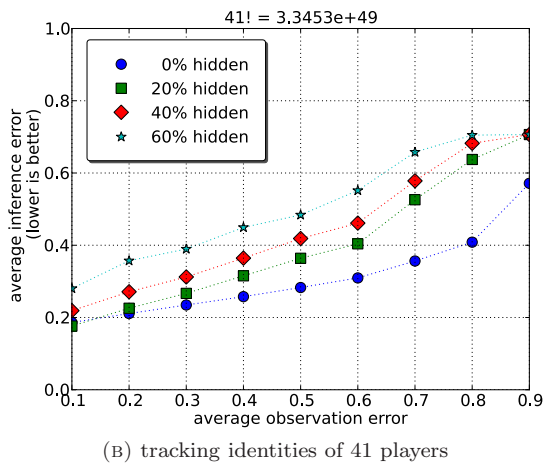}
  }
  \caption{Tracking  error on a  football visual  surveillance dataset
    for  41 players  as a  function of  observation  noise.  Different
    missing data fractions are shown.}
  \label{fig:soccer}
\end{figure}

Due  to the  unmanageable  size  of the  factorial  space in  identity
tracking problems,  even the powerful  and efficient methods  based on
Fourier representation  of permutations do not report  results on more
than     11~\cite{Huang_2009_6385}     or     15~\cite{KondorHowardJe}
simultaneously      tracked     objects.       The      results     of
Figure~\ref{fig:simulations} show  that our approach  can handle large
numbers   of   objects,   and  Figure~\ref{fig:flights}   demonstrated
comparable accuracy on the air  traffic control dataset.  Next we show
results  on 41 objects  from a  visual surveillance  dataset available
from            {\tt           http://vspets.visualsurveillance.org/}.
Figure~\ref{fig:soccer} shows  an example  of the underlying  data and
results of the identity tracking.  The problem is similar to the above
air  traffic  control:  we  have  added uncertainty  to  the  players,
identities  using  the same  exponential  proximity  model as  before.
Further, unlike the  air traffic domain, here there  are very few time
steps  that do  \emph{not} involve  an  identity swap.   This kind  of
situation is  difficult for  recursive Bayesian filtering  in general.
However, our  approach handles  the situation and  produces reasonable
results with acceptable error rate -- indeed, quite a good error rate,
considering the size of the state space.

\section{Conclusions}
The  main  result  of  this  work is  embedding  permutations  into  a
continuous manifold,  thus lifting a body of  results from directional
statistics   field~\cite{DirectionalStatistics}  to   the   fields  of
ranking,  identity  tracking   and  others,  where  permutations  play
essential role.   Among many potential applications  of this embedding
we have chosen probabilistic identity tracking and were able to set up
a  state-space model  with  efficient recursive  Bayesian filter  that
produced results comparable with the  state of the art techniques very
efficiently even  on a very  large datasets that pose  difficulties to
existing methods.   There remains much  to be done in  this direction.
However,  a simple  model,  that can  be  thought of  as a  continuous
generalization               of               the              Mallows
model~\cite{meila-consensus,fligner1986distance},     equipped    with
results  from  the field  of  directional  statistics has  efficiently
produced  results of  a reasonable  accuracy.  This  is  promising and
encourages  further   development  of  more   complicated  probability
distributions for permutations: further exploration of the exponential
family already developed  in the field~\cite{DirectionalStatistics} as
well  as  developing  more  complex  representations  using  spherical
harmonics representations.

\section*{Acknowledgments}
We thank Risi Kondor  for being impressively responsive and generously
providing the air traffic control  dataset. This work was supported by
NIH  under grant  number  NCRR  1P20 RR021938.  Dr.   Lane's work  was
supported by NSF under Grant No. 0705681.

\bibliography{papers}
\bibliographystyle{plain}

\end{document}